\documentclass{article}

 \usepackage[preprint]{neurips_2026}

\usepackage[utf8]{inputenc} 
\usepackage[T1]{fontenc}    
\usepackage{hyperref}       
\usepackage{url}            
\usepackage{booktabs}       
\usepackage{amsfonts}       
\usepackage{nicefrac}       
\usepackage{microtype}      
\usepackage{xcolor}         

\usepackage{graphicx}
\usepackage{bm}
\usepackage{amsmath}
\usepackage{algorithm}  
\usepackage{algpseudocode}
\usepackage{wrapfig}
\usepackage{multicol}
\usepackage{multirow}
\usepackage{subcaption}
\usepackage[table,xcdraw]{xcolor}
\usepackage{marvosym}

\usepackage{amsthm}

\newtheorem{lemma}{Lemma}

\newtheorem{definition}{Definition}

\usepackage{tcolorbox}
\tcolorboxenvironment{proof}{
  colback=gray!8,
  boxsep=0pt,
  left=8pt, right=8pt, top=8pt, bottom=8pt,
  boxrule=1pt
}

\title{Control Your View: High-Resolution Global Semantic Manipulation in Learned Image Compression}

\author{%
  Jiaming Liang
  \\
  University of Macau\\
  Macau, China \\
  \texttt{chinaliangjm@gmail.com} \\
  \And
  Chi-Man Pun \Letter\\
  University of Macau \\
  Macau, China \\
  \texttt{cmpun@um.edu.mo} \\
  \AND
  Weisi Lin \\
  Nanyang Technological University \\
  Singapore \\
  \texttt{wslin@ntu.edu.sg} \\
  \And
  Greta Seng Peng Mok \Letter\\
  University of Macau \\
  Macau, China \\
  \texttt{gretamok@um.edu.mo} \\
}

\definecolor{darkblue}{RGB}{0,0,100}
\definecolor{darkgreen}{RGB}{0,100,0}
\definecolor{revise}{RGB}{255,0,0}
\begin{document}

\maketitle

\begin{abstract}
Learned image compression (LIC) integrates deep neural networks (DNNs) to map high-dimensional images into compact latent representations, reducing redundancy and achieving superior rate–distortion (RD) performance in benign settings. Unfortunately, due to inherent vulnerabilities in DNNs, LIC systems are susceptible to adversarial perturbations that lead to downstream deterioration, compression rate degradation, untargeted distortion, and both local semantic manipulation (LSM) and low-resolution ($3\times28\times28$) global semantic manipulation (GSM). 
However, high-resolution GSM remains unexplored due to its intractability. Notably, the existing project gradient descent (PGD) method achieves near-perfect white-box attacks for classification, segmentation, and other tasks, yet fails to generalize to high-resolution GSM. 
Our theoretical and empirical analyses reveal that well-performing GSM drives adversarial examples from the Identity Region to the Amplification Region through the Lazying–Oscillating–Refining stages. 
General $\ell_{\infty}$-bounded attacks fail on high-resolution GSM because their step-size schedules cannot accommodate both the Oscillating and Refining stages. 
Based on this, we propose the Periodic Geometric Decay schedule that enables $\ell_{\infty}$-bounded high-resolution GSM. 
To verify our approach, we integrate it with PGD, yielding a minimal variant, PGD$^{2}$-GSM. 
Extensive experiments on the Kodak $(3\times768\times512)$ demonstrate that our PGD$^{2}$-GSM is the first to stably achieve high-resolution GSM, thereby exposing a novel threat to LIC systems. Code is available at \url{https://github.com/chinaliangjiaming/PGD2-GSM}.
\end{abstract}

\section{Introduction}
\begingroup
\renewcommand{\thefootnote}{}
\footnotetext{\Letter \; Corresponding authors.}
\endgroup
\begin{figure}[h]
    \centering
    \includegraphics[width=0.92\linewidth]{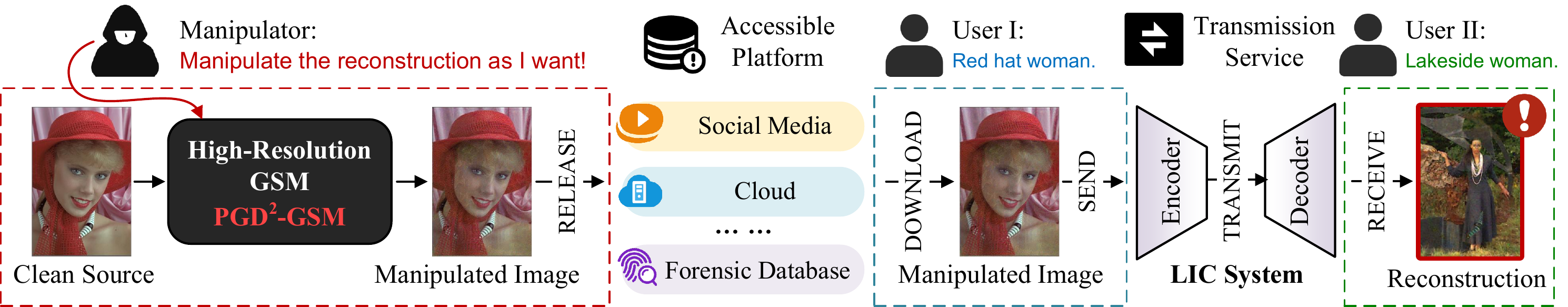}
    \caption{Illustrative threat scenario of high-resolution GSM: Manipulator controls the reconstruction perceived by User II after the LIC encoder-decoder, while both User I and User II remain unaware of the malicious manipulation. The stealthiness and controllability of GSM pose significant risks.}
    \label{fig:high_resolution_GSM_threat_case}
\end{figure}

\begin{figure}
    \centering
    \includegraphics[width=0.80\linewidth]{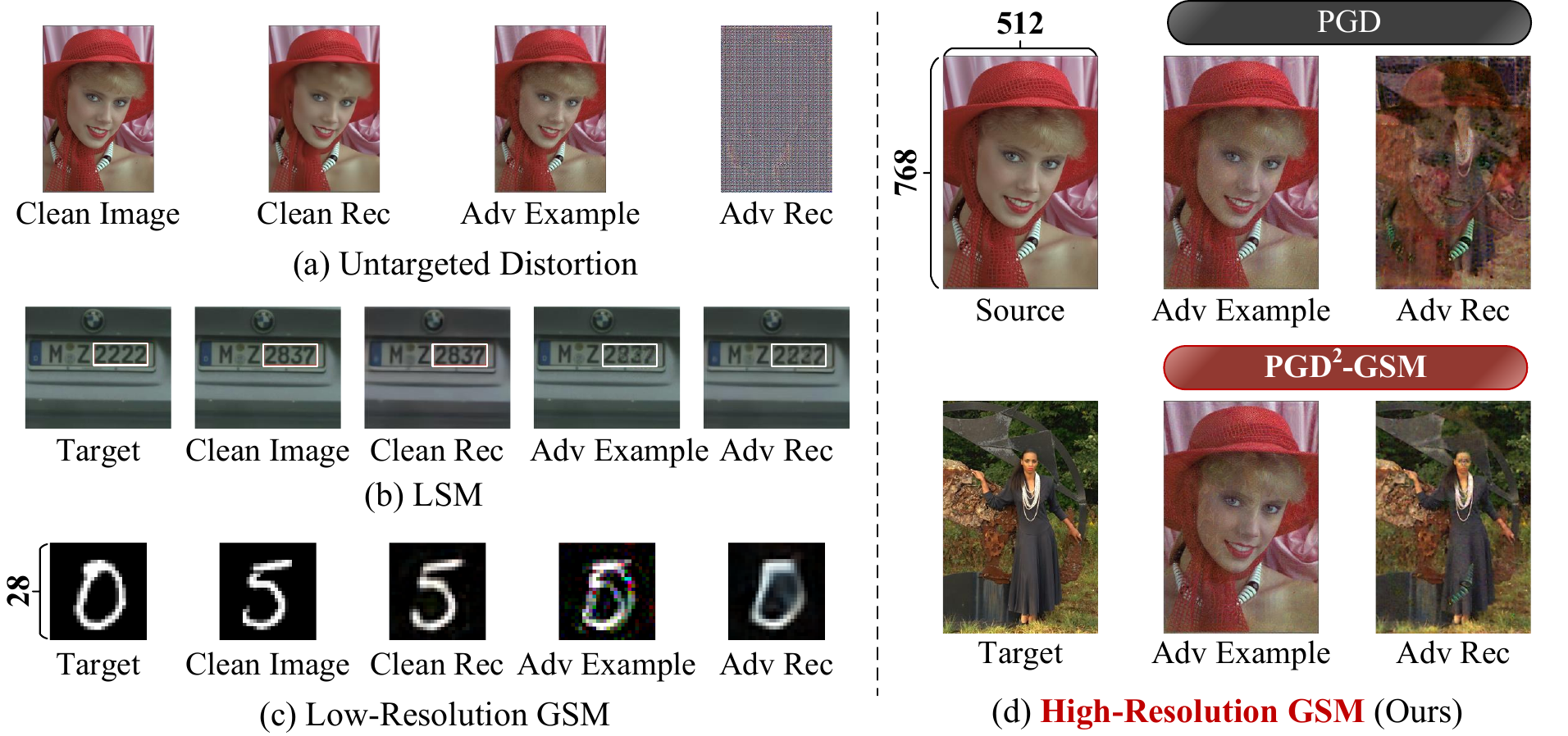}
    \caption{Overview of reconstruction fidelity attacks. Rec and Adv are abbreviations for reconstruction and adversarial, respectively. (a) Untargeted distortion induced by PGD. 
    (b) LSM from~\cite{chen2023toward}, with the manipulated region highlighted by a white box. (c) Low-resolution GSM from~\cite{chen2023toward}. (d) Our high-resolution GSM. The proposed minimal yet effective PGD$^{2}$-GSM enables stable $\ell_{\infty}$-bounded high-resolution GSM in LIC systems via our Periodic Geometric Decay step-size schedule. 
    }
    \label{fig:teaser}
\end{figure}
\textbf{Background.} Image compression reduces the bit rate for efficient storage and transmission, conditioned on reconstruction fidelity constraints. Despite significant progress, conventional hand-crafted codecs such as JPEG~\cite{wallace1991jpeg}, JPEG 2000~\cite{skodras2002jpeg}, HEVC/H.265~\cite{sullivan2012overview}, and VVC/H.266~\cite{bross2021overview} are still limited by their insufficient exploitation of redundancies. Recently, learned image compression (LIC) has harnessed the expressive nonlinear representation capacity of deep neural networks (DNNs) to jointly learn analysis–synthesis transforms~\cite{balle2017end} and entropy models~\cite{balle2018variational, minnen2018joint} end-to-end, overcoming the limitations of hand-crafted designs and significantly improving rate–distortion (RD) performance. Despite the superior RD performance of LIC systems under benign settings~\cite{fu2024weconvene, han2024causal, jiang2024llic, feng2025linear, lu2025learned, li2025learned, duan2026learned, zhang2026qarv++}, the incorporation of DNNs inevitably exposes the systems to adversarial threats~\cite{szegedy2013intriguing, goodfellow2015explaining}, especially given the white-box accessibility of LIC models in practical deployments, thereby attracting growing attention to LIC security. Existing studies on LIC threats primarily fall into three branches: one focuses on impairing downstream tasks~\cite{sui2024transferable}, another seeks to degrade compression rate~\cite{liu2023manipulation, yu2023backdoor, wu2025adversarial, kurihara2025efficient}, and the third investigates disrupting reconstruction fidelity~\cite{chen2023toward, yu2023backdoor, sui2024reconstruction, ma2024imperceptible, wu2025adversarial, kurihara2025efficient, kalmykov2026t}. 

Intriguingly, apropos of reconstruction fidelity, research~\cite{chen2023toward} has demonstrated that adversarial perturbations can not only induce untargeted distortion in reconstructed images (Figure~\ref{fig:teaser}(a)), but can even steer them toward attacker-specified targets, i.e., semantic manipulation (Figure~\ref{fig:teaser}(b-d)). Based on the spatial scope of the manipulated reconstruction, semantic manipulation in LIC can be divided into local semantic manipulation (LSM) and global semantic manipulation (GSM). LSM (Figure~\ref{fig:teaser}(b)) and low-resolution GSM (Figure~\ref{fig:teaser}(c), validated in~\cite{chen2023toward} on three-channel MNIST handwritten digit images of size $28\times28$) are readily attainable through $\ell_{\infty}$-bounded PGD~\cite{madry2018towards} or $\ell_{2}$-bounded C\&W~\cite{carlini2017towards} attacks owing to the limited search space. However, a more concerning form of \textcolor{darkblue}{high-resolution GSM} (Figure~\ref{fig:teaser}(d)) remains unexplored, despite its potential to fundamentally compromise the LIC security, as illustrated by a representative threat scenario in Figure~\ref{fig:high_resolution_GSM_threat_case}. To bridge this gap, we systematically investigate high-resolution GSM for the first time. Since $\ell_{\infty}$-bounded attacks exhibit higher efficiency than the $\ell_{2}$-bounded attacks in LIC systems~\cite{chen2023toward}, we focus on $\ell_{\infty}$-bounded high-resolution GSM here.

\textbf{Challenges}. Counterintuitively, the strong white-box attack PGD performs near-perfectly for classification~\cite{madry2018towards}, segmentation~\cite{gu2022segpgd}, and other tasks~\cite{zhang2019towards}, yet fails in high-resolution GSM, as illustrated in Figure~\ref{fig:teaser}(d). 
The challenges arise from two aspects: (1) High-resolution images induce a prohibitively large search space, requiring $C\times H\times W$ perturbations to satisfy equally many reconstruction constraints. (2) More essentially, unlike most deep learning tasks with heterogeneous input–output mappings that make it easier for perturbations to identify directions, LIC models exhibit an intrinsic identity-mapping property, with outputs approximating inputs for most samples. We find that this flatter mapping further increases the difficulty of high-resolution GSM. Despite these challenges, we identify the causes of general $\ell_{\infty}$-bounded attacks' failure and adapt them into the effective variants.

\textbf{Theory and Methodology}. Our analysis (Section~\ref{section: Identity Region and Amplification Region}) shows that the input space of well-trained LIC models splits into a wide \textcolor{darkblue}{Identity Region} and a narrow \textcolor{darkblue}{Amplification Region}. Adversarial examples initially reside in the identity region, where the identity-mapping property drives perturbations toward the $\epsilon$-bounded target–source difference, forming \textcolor{darkblue}{Lazy Examples}. Since effective GSM examples exist only in the amplification region, transitioning from the identity region to the amplification region is critical. Once in the amplification region, they must be refined to locate operating points that amplify perturbations toward the target-source difference. Accordingly, we characterize effective GSM as \textcolor{darkblue}{Lazying-Oscillating-Refining}, and introduce \textcolor{darkblue}{Lazy Cosine Similarity} for validation (Section~\ref{section: Lazying-Oscillating-Refining}). 

In the Oscillating stage, a large step size $\alpha$ is needed to explore a broader region and rapidly approach the amplification region. In the Refining stage, the large step size $\alpha$ destabilizes adversarial examples and may push them out of the amplification region. Therefore, a smaller $\alpha$ is required for precise localization. General $\ell_{\infty}$-bounded attacks fail due to their step-size schedules cannot simultaneously accommodate the Oscillating and Refining stages (Section~\ref{section: PGD2_GSM}). 
Therefore, we propose the \textcolor{darkblue}{Periodic Geometric Decay} schedule to enable $\ell_{\infty}$-bounded attacks on high-resolution GSM. To verify our approach, we integrate it with PGD, yielding a minimal variant, \textcolor{darkblue}{PGD$^{2}$-GSM}. 
Extensive experiments (Section~\ref{section: experiments}) on the Kodak ($3\times768\times512$) demonstrate that even this minimal variant effectively enables stable high-resolution GSM for the first time. Our contributions are summarized as follows:
\begin{itemize}
    \item To the best of our knowledge, this is the first study of high-resolution GSM in LIC. The stealthiness and controllability of high-resolution GSM poses a more formidable challenge.
    \item We identify that the input space of LIC models partitions into the Identity Region and the Amplification Region, with effective GSM adversarial examples residing in the latter. Accordingly, we further deduce that well-performing GSM undergoes three stages: Lazying-Oscillating-Refining, and propose Lazy Cosine Similarity to validate this process.
    \item Building on this progression, we find that effective GSM requires a large step size early to broaden exploration and quickly enter the amplification region, followed by a smaller step size to locate suitable operating points within it. Accordingly, we propose the Periodic Geometric Decay step-size schedule, yielding the minimal yet effective variant PGD$^{2}$-GSM.
    \item Extensive experiments on the high-resolution Kodak dataset demonstrate that our PGD$^{2}$-GSM stably achieves $\ell_{\infty}$-bounded high-resolution GSM in LIC systems for the first time.
\end{itemize}

\section{Related Work}
\subsection{Learned Image Compression}
LIC follows the transform coding framework, consisting of transform, quantization, and entropy coding. Unlike conventional codecs, it replaces hand-crafted transforms with learnable analysis–synthesis transforms~\cite{balle2017end}, incorporating entropy modeling to estimate the latent distribution and minimize code length~\cite{balle2018variational, minnen2018joint}, and employing differentiable quantization for end-to-end optimization~\cite{balle2017end}. Extensive research has been devoted to optimizing the components of LIC, yielding significant improvements. \textbf{Transform Optimizations.} To improve RD performance, various analysis–synthesis transform architectures have been developed. \textit{STF}~\cite{zou2022devil} introduces window-based attention blocks. \textit{NVTC}~\cite{feng2023nvtc} introduces nonlinear vector transforms. \textit{TCM}~\cite{liu2023learned} proposes mixed Transformer–CNN architectures. \textit{FTIC}~\cite{li2024frequency} integrates a frequency-aware Transformer. \textit{LLIC}~\cite{jiang2024llic} introduces large-kernel depth-wise convolutions and self-conditioned weight generation. \textit{LALIC}~\cite{feng2025linear} incorporates linear attention blocks. \textit{QARV++}~\cite{zhang2026qarv++} integrates disentangled latent mappings and deformable convolutions. \textbf{Quantization Optimizations.} Standard differentiable approximations for quantization include \textit{Stochastic Quantization}~\cite{toderici2016variable}, \textit{Uniform Noise Injection}~\cite{balle2017end}, \textit{Straight-Through Estimator}~\cite{theis2017lossy}, and hybrid approaches~\cite{minnen2020channel}. To enhance both RD performance and training efficiency, especially for variable-rate models, a variety of quantization surrogates have been developed, such as \textit{Universal Quantization}~\cite{agustsson2020universally}, \textit{Dead-Zone Quantizer}~\cite{zhou2020variable}, \textit{Soft Bit}~\cite{alexandre2019learned}, \textit{Soft‑then‑Hard}~\cite{guo2021soft}, and \textit{STanH}~\cite{presta2025stanh}. 
\textbf{Entropy Coding Optimizations.} To more accurately estimate the latent distribution and improve coding efficiency, various entropy modeling techniques have been developed.~\cite{minnen2020channel} introduces channel-conditioning and latent residual prediction. \textit{CCM}~\cite{he2021checkerboard} proposes a parallelizable checkerboard context model with two-pass decoding. \textit{ELIC}~\cite{he2022elic} explores an uneven grouping strategy to accelerate the channel-conditional method and integrates it with a parallel spatial context model. \textit{Entroformer}~\cite{qian2022entroformer} proposes a parallel bidirectional context model. 
\textit{HPCM}~\cite{li2025learned} proposes the hierarchical coding and the progressive context fusion. Many other interesting works~\cite{li2024multirate, kim2024diversify, perugachi2024robustly, fu2024weconvene, han2024causal, zhang2024learning, jiang2025mlic++, duan2026learned, wang2026distilling} are not detailed for brevity.

\subsection{Adversarial Threats on Learned Image Compression}
Despite continual advances in RD performance, LIC models inherently suffer from adversarial threats. Owing to their distinctive identity-mapping property (Section~\ref{section: Identity Region and Amplification Region}) compared to other deep models, LIC security merits dedicated investigation. Unfortunately, research in this area remains scant.~\cite{chen2023toward} represents the first systematic study on the LIC security, considering both $\ell_{2}$ and $\ell_{\infty}$ attacks. Its robustness evaluation focuses on reconstruction fidelity, including untargeted distortion, LSM, and low-resolution GSM. However, this work relies on basic \textit{I-FGSM}~\cite{goodfellow2015explaining}, \textit{C\&W}~\cite{carlini2017towards}, and a simple variant of \textit{C\&W}, making high-resolution GSM remains highly challenging.~\cite{yu2023backdoor} proposes the first backdoor attack specifically targeting LIC models. The designed frequency-based trigger is shown to degrade compression rates, distort reconstructed images, and mislead downstream tasks. To enhance the imperceptibility, inspired by the human visual system,~\cite{ma2024imperceptible} constrains the perturbations to the high-frequency regions of the chrominance components in YUV images. Around the same time,~\cite{sui2024reconstruction} leverages the frequency-domain representation induced by the discrete cosine transform to improve perturbation imperceptibility.~\cite{wu2025adversarial} presents the first study to evaluate joint rate–distortion attacks on LIC models.~\cite{madden2025bitstream} first proposes and demonstrates the effectiveness of bitstream collision attacks. T-MLA~\cite{kalmykov2026t} proposes a multiscale log–exponential frequency-aware attack framework to enhance imperceptibility in both spatial and spectral domains. However, high-resolution GSM remains unexplored. It fundamentally undermines LIC reliability by enabling reconstructions to be dictated by malicious intent. We present the first systematic study on this threat.

\section{Theory and Methodology}
\subsection{Preliminaries}
We begin by presenting an abstract framework of LIC that accommodates diverse transform architectures, quantization strategies, and entropy modeling approaches. A complete LIC model $f$ can be expressed as a cascade of an encoder $E$, a quantizer $Q$, a dequantizer $Q^{-1}$, and a decoder $D$:
\begin{equation}
    f = D \circ \ Q^{-1} \circ Q \circ E,
\end{equation}
where $\circ$ denotes the composition of functions. During the encoding stage, the encoder $E$ maps the input image $\boldsymbol{x}\in \mathcal{X}$ from the sample space $\mathcal{X}$ to a latent representation $\boldsymbol{y}$ and a hyper latent $\boldsymbol{z}$ that captures distributional information, which are subsequently quantized by $Q$ into discreted $\hat{\boldsymbol{y}}$ and $\hat{\boldsymbol{z}}$:
\begin{equation}
    \left[\hat{\boldsymbol{y}}, \hat{\boldsymbol{z}}\right] = Q\left(E\left(\boldsymbol{x}\right)\right).
\end{equation}
During the decoding stage, the dequantizer $Q^{-1}$ recovers the noisy latent representation $\tilde{\boldsymbol{y}}$ from $\hat{\boldsymbol{y}}$ and $\hat{\boldsymbol{z}}$, which are then fed into the decoder $D$ to produce the reconstructed image $\hat{\boldsymbol{x}}$:
\begin{equation}
    \hat{\boldsymbol{x}} = D(Q^{-1}\left(\hat{\boldsymbol{y}}, \hat{\boldsymbol{z}}\right)).
\end{equation}
The objective of a LIC model is to balance bit rate $\mathcal{R}$ and distortion $\mathcal{D}$ via a Lagrange multiplier $\lambda$:
\begin{equation}
    \min \mathcal{L}=\mathbb{E}_{\boldsymbol{x}\sim\mathcal{X}}[\underbrace{r\left(\hat{\boldsymbol{y}}\right) + r\left(\hat{\boldsymbol{z}}\right)}_{\mathcal{R}} + \lambda\cdot\underbrace{d\left(\boldsymbol{x}, \hat{\boldsymbol{x}}\right)}_{\mathcal{D}}].
\end{equation}
Here, $r\left(\cdot\right)$ measures the bit rate, and $d\left(\cdot,\cdot\right)$ is a perceptual distortion measure.

Compression models are generally available as universal standards, with their architectures and parameters fully accessible~\cite{kos2018adversarial, chen2023toward}. Consequently, attackers typically operate in white-box settings. Under this scenario, $\ell_{\infty}$-bounded GSM is defined as follows:
\begin{definition}
    \textbf{$\ell_{\infty}$-bounded GSM.} Given an accessible LIC model $f$ and a set of victim-target pairs $S=\{(\boldsymbol{x}_{p_{i}}, \boldsymbol{x}_{q_{i}})\mid\boldsymbol{x}_{p_{i}}, \boldsymbol{x}_{q_{i}}\in \mathcal{X},0\leq i< N\}$, determine the set of perturbations $\{\boldsymbol{\delta}_{i}\mid||\boldsymbol{\delta}_{i}||_{\infty}\leq\epsilon, 0\leq i< N\}$, minimize the following objective:
\begin{equation}
    \arg\min_{\{\boldsymbol{\delta}_{i}\}} \mathbb{E}_{0\leq i< N}\left[d(f(\boldsymbol{x}_{p_{i}}+\boldsymbol{\delta}_{i}), \boldsymbol{x}_{q_{i}})\right].
    \label{GSM_pairs_optimization_objective}
\end{equation}
\end{definition}
We consider the general Mean Squared Error (MSE) as $d\left(\cdot,\cdot\right)$ here. Substituting MSE into Equation~\ref{GSM_pairs_optimization_objective} for a pair $(\boldsymbol{x}_{p}, \boldsymbol{x}_{q})$ of shape $(C, H, W)$ and scaling by $\frac{C\times H\times W}{2}$, the single-pair objective is obtained:
\begin{equation}
    \arg\max_{\boldsymbol{\delta}} \phi=-\frac{1}{2}||f(\boldsymbol{x}_{p}+\boldsymbol{\delta})-\boldsymbol{x}_{q}||^{2}_{2}.
    \label{GSM_single_pair_optimization_objective}
\end{equation}
To achieve this, the strong white-box attack PGD~\cite{madry2018towards} initializes $\boldsymbol{\delta}$ as $\boldsymbol{\delta^{(0)}}\sim\text{Uniform}(-\epsilon,\epsilon)$ and iteratively performs projected gradient descent for $T$ steps, with the recursive expression:
\begin{equation}
    \boldsymbol{\delta}^{(t+1)}=\Pi_{[-\epsilon,\epsilon]}(\boldsymbol{\delta}^{(t)}+\alpha\cdot\mathrm{sgn}\left( \nabla_{\boldsymbol{\delta}^{(t)}}\phi\right)),\quad0\leq t<T.
    \label{recursive_PGD}
\end{equation}
Here, $\mathrm{sgn}(\cdot)$ is the sign function, and $\Pi_{[-\epsilon,\epsilon]}(\cdot)$ projects the perturbation onto the $\epsilon$-ball. Surprisingly, even such a highly effective $\ell_{\infty}$-bounded attack on other tasks fails to generalize to high-resolution GSM. 
We next elucidate the causes and present an \textit{indispensable} strategy for high-resolution GSM.

\subsection{Identity Region and Amplification Region}
\label{section: Identity Region and Amplification Region}


For most tasks, the exact form of $\nabla_{\boldsymbol{\delta}^{(t)}}\phi$ is complicated due to the heterogeneity between the input and output. In contrast, a well-trained LIC model $f$ enforces $\hat{\boldsymbol{x}}\approx\boldsymbol{x}$, thereby exhibiting an approximate identity-mapping property for most benign data points. Based on the identity-mapping property, we decompose $f(\boldsymbol{x})$ into the input $\boldsymbol{x}$ and the reconstruction residual $\boldsymbol{\eta}$ induced by $f$:
\begin{equation}
    f(\boldsymbol{x})=\boldsymbol{x}+\boldsymbol{\eta},
    \label{approximation_f}
\end{equation}
we have $\eta=||\boldsymbol{\eta}||\ll||\boldsymbol{x}||$ in benign regions. Therefore, $\nabla_{\boldsymbol{\delta}^{(t)}}\phi$ can in fact be approximated as:
\begin{equation}
\nabla_{\boldsymbol{\delta}^{(t)}}\phi=-\left[f(\boldsymbol{x}_{p}+\boldsymbol{\delta}^{(t)})-\boldsymbol{x}_{q}\right]\cdot\nabla_{\boldsymbol{\delta}^{(t)}}f(\boldsymbol{x}_{p}+\boldsymbol{\delta}^{(t)}))\approx-\left[(\boldsymbol{x}_{p}+\boldsymbol{\delta}^{(t)}+\boldsymbol{\eta}^{(t)})-\boldsymbol{x}_{q}\right]\cdot\boldsymbol{I},
\label{approxiamation_gradient_1}
\end{equation}
for benign regions. Due to the imperceptibility of perturbations and the generalization capability of the LIC model, both $||\boldsymbol{\delta}^{(t)}||$ and $||\boldsymbol{\eta}^{(t)}||$ are negligible compared to $||\boldsymbol{x}_{q}-\boldsymbol{x}_{p}||$. These terms and the approximation error are subsumed into a negligible term $\boldsymbol{o}^{(t)}$. Accordingly, we have:
\begin{equation}
\nabla_{\boldsymbol{\delta}^{(t)}}\phi=\boldsymbol{x}_{q}-\boldsymbol{x}_{p}-\boldsymbol{o}^{(t)},\quad||\boldsymbol{o}^{(t)}||\ll||\boldsymbol{x}_{q}-\boldsymbol{x}_{p}||.
\label{approxiamation_gradient_2}
\end{equation}
Equation~\ref{approxiamation_gradient_2} suggests that, within the benign region, PGD iteratively updates $\boldsymbol{x}_{p}+\boldsymbol{\delta}^{(t)}$ toward the \textcolor{darkblue}{Lazy Example} $\boldsymbol{x}_{p}+\Pi_{[-\epsilon,\epsilon]}(\boldsymbol{x}_{q}-\boldsymbol{x}_{p})$ following Equation~\ref{recursive_PGD}, until saturation. We refer to $\Pi_{[-\epsilon,\epsilon]}(\boldsymbol{x}_{q}-\boldsymbol{x}_{p})$ as the \textcolor{darkblue}{Lazy Perturbation}. Section~\ref{section: Lazying-Oscillating-Refining} will show that this process is inevitable in the initial stage.


The above analysis and conclusions are established in the regions where the identity-mapping property consistently holds. We now formally define these regions and their complement relative to $\mathcal{X}$:
\begin{definition}
    \textbf{Identity Region $R_{I}$ and Amplification Region $R_{A}$.} For a well-trained LIC model $f$, the \textcolor{darkblue}{Identity Region} $R_{I}=\{\boldsymbol{x}\in\mathcal{X}\mid \eta\leq \eta_{0}\ll||\boldsymbol{x}_{q}-\boldsymbol{x}_{p}||\}$, where $\eta_{0}$ denotes the upper bound of the deviation. The complement of $R_{I}$ is defined as the \textcolor{darkblue}{Amplification Region} $R_{A}=\mathcal{X}\setminus R_{I}$.
    \label{definition_identity_region}
\end{definition}
From Definition~\ref{definition_identity_region}, the location of effective GSM examples follows (see appendix~\ref{proof:adv_in_amplification_region} for proof):

\begin{lemma}
    Effective GSM adversarial example $\boldsymbol{x}_{p}+\boldsymbol{\delta}$ lies in amplification region $R_{A}$.
    \label{lemma:adv_in_amplification_region}
\end{lemma}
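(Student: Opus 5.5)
The plan is a proof by contradiction that uses only Definition~\ref{definition_identity_region}, the decomposition in Equation~\ref{approximation_f}, and the $\ell_\infty$ budget. The first step is to fix what ``effective'' means. We call $\boldsymbol{x}_{p}+\boldsymbol{\delta}$ an effective GSM adversarial example if its reconstruction is close to the target, $\|f(\boldsymbol{x}_{p}+\boldsymbol{\delta})-\boldsymbol{x}_{q}\|\leq\tau$, with a tolerance $\tau\ll\|\boldsymbol{x}_{q}-\boldsymbol{x}_{p}\|$. This is the regime the paper assumes throughout, since the negligible quantities are always measured against the target--source gap. We also use the imperceptibility assumption already invoked before Equation~\ref{approxiamation_gradient_2}, namely $\|\boldsymbol{\delta}\|\ll\|\boldsymbol{x}_{q}-\boldsymbol{x}_{p}\|$. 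In the $\ell_\infty$ setting this follows from $\|\boldsymbol{\delta}\|_{2}\leq\epsilon\sqrt{CHW}$, with $\epsilon$ small relative to the per-pixel target--source difference.

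For the contradiction, suppose $\boldsymbol{x}_{p}+\boldsymbol{\delta}\in R_{I}$. By Equation~\ref{approximation_f}, $f(\boldsymbol{x}_{p}+\boldsymbol{\delta})=\boldsymbol{x}_{p}+\boldsymbol{\delta}+\boldsymbol{\eta}$, and membership in $R_I$ gives $\eta\leq\eta_{0}$. Writing $\boldsymbol{x}_{q}-\boldsymbol{x}_{p}=(\boldsymbol{x}_{q}-f(\boldsymbol{x}_{p}+\boldsymbol{\delta}))+\boldsymbol{\delta}+\boldsymbol{\eta}$ and applying the triangle inequality yields
\begin{equation*}
\|\boldsymbol{x}_{q}-\boldsymbol{x}_{p}\|\leq\tau+\|\boldsymbol{\delta}\|+\eta_{0}.
\end{equation*}
Each of the three terms on the right is much smaller than $\|\boldsymbol{x}_{q}-\boldsymbol{x}_{p}\|$: the first by effectiveness, the second by imperceptibility, and the third by the definition of $R_I$. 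Their sum is therefore strictly smaller than $\|\boldsymbol{x}_{q}-\boldsymbol{x}_{p}\|$, which is a contradiction. Hence $\boldsymbol{x}_{p}+\boldsymbol{\delta}\notin R_{I}$, and so $\boldsymbol{x}_{p}+\boldsymbol{\delta}\in R_{A}=\mathcal{X}\setminus R_{I}$.

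An equivalent contrapositive reading is useful for the later sections. The reverse triangle inequality gives $\eta\geq\|\boldsymbol{x}_{q}-\boldsymbol{x}_{p}\|-\|\boldsymbol{\delta}\|-\tau$, so the residual must be of the same order as the target--source gap. This is exactly the ``amplification'' of a small perturbation into a large output change.

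The main obstacle is rigor rather than algebra: the relation ``$\ll$'' is informal. The step that needs the most care is combining three ``negligible'' terms into a strict inequality. I would handle this by making the constants explicit: assume $\tau+\|\boldsymbol{\delta}\|+\eta_0<\|\boldsymbol{x}_{q}-\boldsymbol{x}_{p}\|$, for example with each term at most a third of the gap, and state this as the quantitative meaning of the hypotheses. A secondary point is justifying that the $\ell_\infty$ budget controls the $\ell_2$ norm at high resolution. I would state this condition explicitly, or else phrase the argument per pixel, comparing $|\delta_j|\leq\epsilon$ with $|x_{q,j}-x_{p,j}|$ on the pixels where the target and source differ substantially.
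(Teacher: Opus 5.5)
Your proof is correct and takes the same route as the paper: assume $\boldsymbol{x}_{p}+\boldsymbol{\delta}\in R_{I}$, write $f(\boldsymbol{x}_{p}+\boldsymbol{\delta})-\boldsymbol{x}_{p}=\boldsymbol{\delta}+\boldsymbol{\eta}$, and get a contradiction because this is negligible compared with $\|\boldsymbol{x}_{q}-\boldsymbol{x}_{p}\|$, while effectiveness demands a displacement of that size. The paper states effectiveness more weakly, as $\|f(\boldsymbol{x}_{p}+\boldsymbol{\delta})-\boldsymbol{x}_{p}\|\approx\|\boldsymbol{x}_{q}-\boldsymbol{x}_{p}\|$ (your $\tau$-condition implies this), and it silently credits the smallness of $\|\boldsymbol{\delta}\|$ to the definition of $R_{I}$; your explicit imperceptibility assumption and triangle-inequality bookkeeping simply make that informal argument precise.
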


\subsection{Three-Stage Progression of GSM: Lazying-Oscillating-Refining}
\label{section: Lazying-Oscillating-Refining}

\begin{wrapfigure}{r}{0.42\columnwidth}
\vspace{-12pt}
\centering
\footnotesize
\includegraphics[width=0.98\linewidth]{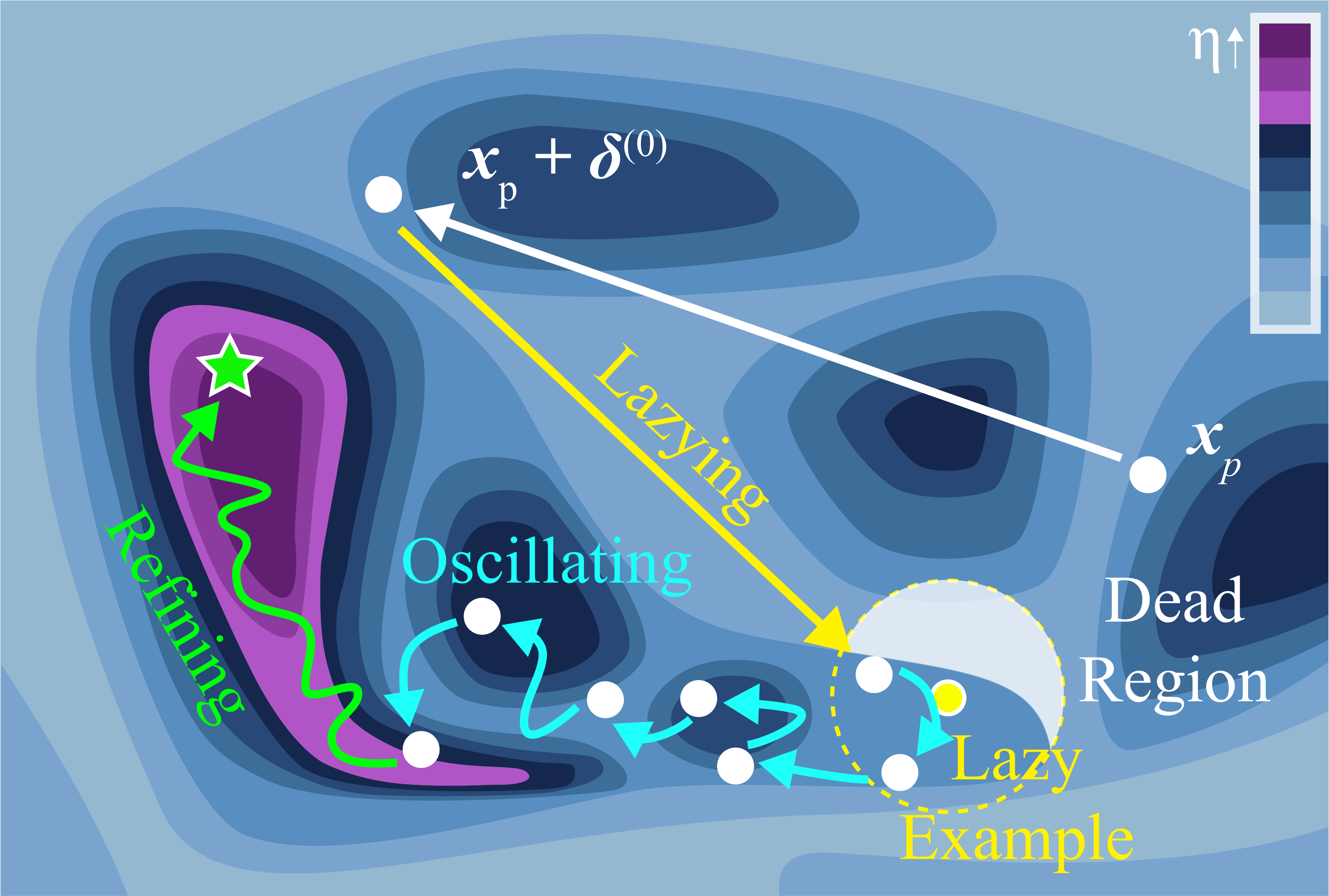}
\caption{Lazying–Oscillating–Refining illustrated via $\eta$ contours. Blue: Identity Region. Purple: Amplification Region.}
\label{fig:eta_map}
\vspace{-20pt}
\end{wrapfigure}

Section~\ref{section: Identity Region and Amplification Region} reveals that effective GSM examples reside in the amplification region, while most source images lie in the identity region. This suggests two critical phases for successful GSM: (1) driving the adversarial example from the identity region into the amplification region, and (2) identifying an appropriate operating point within the amplification region to amplify the perturbation toward $\boldsymbol{x}_{q}-\boldsymbol{x}_{p}$. We summarize successful GSM into three stages: \textcolor{darkblue}{Lazying–Oscillating–Refining} (Figure~\ref{fig:eta_map}).


\textbf{Stage I: Lazying.} In this stage, the adversarial example stays in the identity region and is driven toward the lazy example by Equation~\ref{approxiamation_gradient_2}. It rapidly saturates upon entering the neighborhood of the lazy example.


\textbf{Stage II: Oscillating.} In this stage, the example seeks to transition from the identity to the amplification region to further optimize Equation~\ref{GSM_single_pair_optimization_objective}, but this is not guaranteed. Entering the \textcolor{darkblue}{Active Region} near the lazy example in the Lazying stage enables the transition, whereas falling into the \textcolor{darkblue}{Dead Region} prevents it (empirical randomness).


\textbf{Stage III: Refining.} In this stage, the adversarial example is refined to find a suitable operating point within the amplification region. However, the updated example does not always stay within the amplification region, as large step sizes may push it out due to the narrowness of the region.

To quantify the three-stage progression and establish its empirical foundation, we design a suitable metric as an indicator. Based on our analysis, we identify: (1) in the Lazying stage, the adversarial example approaches the lazy example and eventually saturates, (2) in the Oscillating stage, it oscillates around the lazy example to search for the amplification region, and (3) in the Refining stage, it enters the amplification region and departs from the lazy example to locate a suitable operating point. 
Overall, the process exhibits a trajectory of rapidly approaching, then oscillating around, and finally departing from the lazy example. Motivated by this observation, we use similarity to the lazy perturbation as an indicator and define the \textcolor{darkblue}{Lazy Cosine Similarity (LCS)} to characterize this progression:
\begin{definition}
    \textbf{Lazy Cosine Similarity.} The Lazy Cosine Similarity of  $\boldsymbol{\delta}^{(t)}$ and $\boldsymbol{x}_{p}+\boldsymbol{\delta}^{(t)}$ is defined as the cosine similarity between the lazy perturbation $\Pi_{[-\epsilon,\epsilon]}(\boldsymbol{x}_{q}-\boldsymbol{x}_{p})$ and $\boldsymbol{\delta}^{(t)}$, expressed as:
    \begin{equation}
        \text{LCS}_{(\boldsymbol{x}_{p},\boldsymbol{x}_{q})}(\boldsymbol{\delta}^{(t)})=\frac{<\boldsymbol{\delta}^{(t)}, \Pi_{[-\epsilon,\epsilon]}(\boldsymbol{x}_{q}-\boldsymbol{x}_{p})>}{||\boldsymbol{\delta}^{(t)}||\cdot||\Pi_{[-\epsilon,\epsilon]}{\epsilon}(\boldsymbol{x}_{q}-\boldsymbol{x}_{p})||}.
    \end{equation}
\end{definition}

\subsection{Enabling $\ell_{\infty}$-Bounded High-Resolution GSM}
\label{section: PGD2_GSM}

Based on LCS, the three-stage Lazying–Oscillating–Refining progression becomes clearly visible, and the failure of $\ell_{\infty}$-bounded attacks in GSM can be naturally explained. For illustration, we perform GSM on LALIC~\cite{feng2025linear} with PGD at step sizes $\alpha$ of $0.01$, $0.005$, and $0.001$. The source and target images are identical to those in Figure~\ref{fig:teaser}(d). The perturbation budget $\epsilon$ is set to $0.08$, with $T=500$ iterations and the random seed of $2$. The resulting LCS trajectories over $500$ steps are shown in Figure~\ref{fig:LazyCosSim_illustration}(a), with the corresponding adversarial examples and reconstructions in Figures~\ref{fig:LazyCosSim_illustration}(b–d).

\begin{figure}
    \centering
    \includegraphics[width=1.00\linewidth]{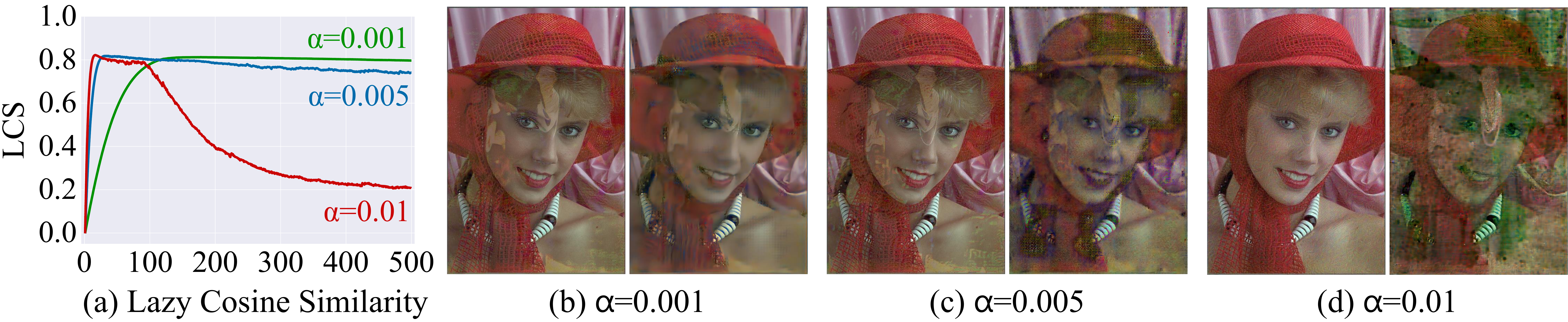}
    \caption{LCS trajectories and visualizations of PGD on LALIC under different $\alpha$, with $\epsilon = 0.08$.}
    \label{fig:LazyCosSim_illustration}
\end{figure}

As shown in Figure~\ref{fig:LazyCosSim_illustration}(a), with a large step size $\alpha=0.01$, the LCS rises rapidly and reaches an acme of $0.822$ at $T=17$, corresponding to the lazying stage. It then oscillates, remaining at $0.782$ at $T=93$, indicating strong similarity to the lazy example and thus the oscillating stage. Thereafter, the LCS drops sharply to $0.272$ and $0.207$ at $T=300$ and $T=500$, respectively, suggesting entry into the amplification region and identification of more favorable operating points, moving away from the lazy example with reduced reliance on identity gain. However, the fixed large $\alpha$ prevents further refinement in the amplification region, leading to failure at the refining stage (Figure~\ref{fig:LazyCosSim_illustration}(d)).

With a small step size $\alpha=0.005$, the LCS likewise rises rapidly, reaching $0.818$ at $T=30$. However, it then decays only marginally and remains $0.741$ at $T=500$, indicating persistent affinity to the lazy example, sustained reliance on identity gain (Figure~\ref{fig:LazyCosSim_illustration}(c)), and failure to enter the amplification region, thereby remaining in the oscillating stage throughout. According to Lemma~\ref{lemma:adv_in_amplification_region}, this configuration precludes the generation of effective GSM adversarial examples. This phenomenon becomes even more pronounced when $\alpha=0.001$ (Figure~\ref{fig:LazyCosSim_illustration}(b)), suggesting that small step size $\alpha$ impedes the transition into the amplification region and ultimately lead to failure.

Therefore, in the Oscillating stage, a large step size $\alpha$ is required to explore a wider region and accelerate entry into the amplification region. In contrast, in the Refining stage, a small step size is necessary to stably locate a suitable operating point. 
General $\ell_{\infty}$-bounded attacks fail due to their inability to accommodate both the Oscillating and Refining stages. 
A simple step-size strategy suffices to enable $\ell_{\infty}$-bounded attacks for GSM: 
\textcolor{darkblue}{Large $\alpha$ in the early phase to enter the amplification region, followed by smaller ones for stable refinement}. Notably, while such a strategy has been explored in prior work~\cite{croce2020reliable} for tasks such as classification attacks, it is typically \textit{optional}. In contrast, it is \textit{indispensable} for high-resolution GSM, as its absence leads to immediate failure (Section~\ref{section:Quantitative_Comparison_Reconstructions}).

\begin{wrapfigure}{r}{0.54\columnwidth}
\begin{minipage}{0.53\columnwidth}
\footnotesize
\vspace{-20pt}
\begin{algorithm}[H]
\caption{PGD$^{2}$-GSM}
\begin{algorithmic}[1]

\Require Source $\boldsymbol{x}_{p}$, target $\boldsymbol{x}_{q}$, victim LIC model $f$, perturbation budget $\epsilon$, steps $T$, initial step size $\alpha$, decay factor $k<1$, decay period $P$.

\State Initialize $\boldsymbol{\delta}^{(0)} \sim \text{Uniform}(-\epsilon, \epsilon)$
\For{$t = 0$ to $T-1$}
    
    \State $\alpha^{(t)} \gets \alpha\cdot k^{\lfloor\frac{t}{P}\rfloor}$ \Comment{\textcolor{darkgreen}{Periodic Geometric Decay}}
    \State $\boldsymbol{g}^{(t)} \gets -\nabla_{\boldsymbol{\delta}^{(t)}} ||f(\boldsymbol{x}_{p}+\boldsymbol{\delta}^{(t)}), \boldsymbol{x}_{q}||_{2}^{2}$
    \State $\boldsymbol{\delta}^{(t+1)} \gets \Pi_{[-\epsilon,\epsilon]}(\boldsymbol{\delta}^{(t)} + \alpha^{(t)}\cdot\mathrm{sgn}(\boldsymbol{g}^{(t)}))$
\EndFor
\State \Return $\Pi_{[0,1]}(\boldsymbol{x}_{p}+\boldsymbol{\delta}^{(T)})$
\end{algorithmic}
\label{algorithm:PGD2_GSM}
\end{algorithm}
\vspace{-20pt}
\end{minipage}
\end{wrapfigure}
To demonstrate the effectiveness of this strategy, we instantiate it with a simple step-size schedule, \textcolor{darkblue}{Periodic Geometric Decay}: the step size $\alpha$ is scaled by a decay factor $k$ every decay period $P$. To verify our approach, we integrate this schedule with PGD, yielding the minimal yet effective variant \textcolor{darkblue}{PGD$^{2}$-GSM} (Algorithm~\ref{algorithm:PGD2_GSM}). 
Section~\ref{section: experiments} will show that such a simple step-size scheduling suffices to enable $\ell_{\infty}$-bounded attacks for high-resolution GSM. This also highlights step-size control as a key mechanism underlying $\ell_{\infty}$-bounded high-resolution GSM.
\begin{figure}[t]
    \centering
    \includegraphics[width=0.98\linewidth]{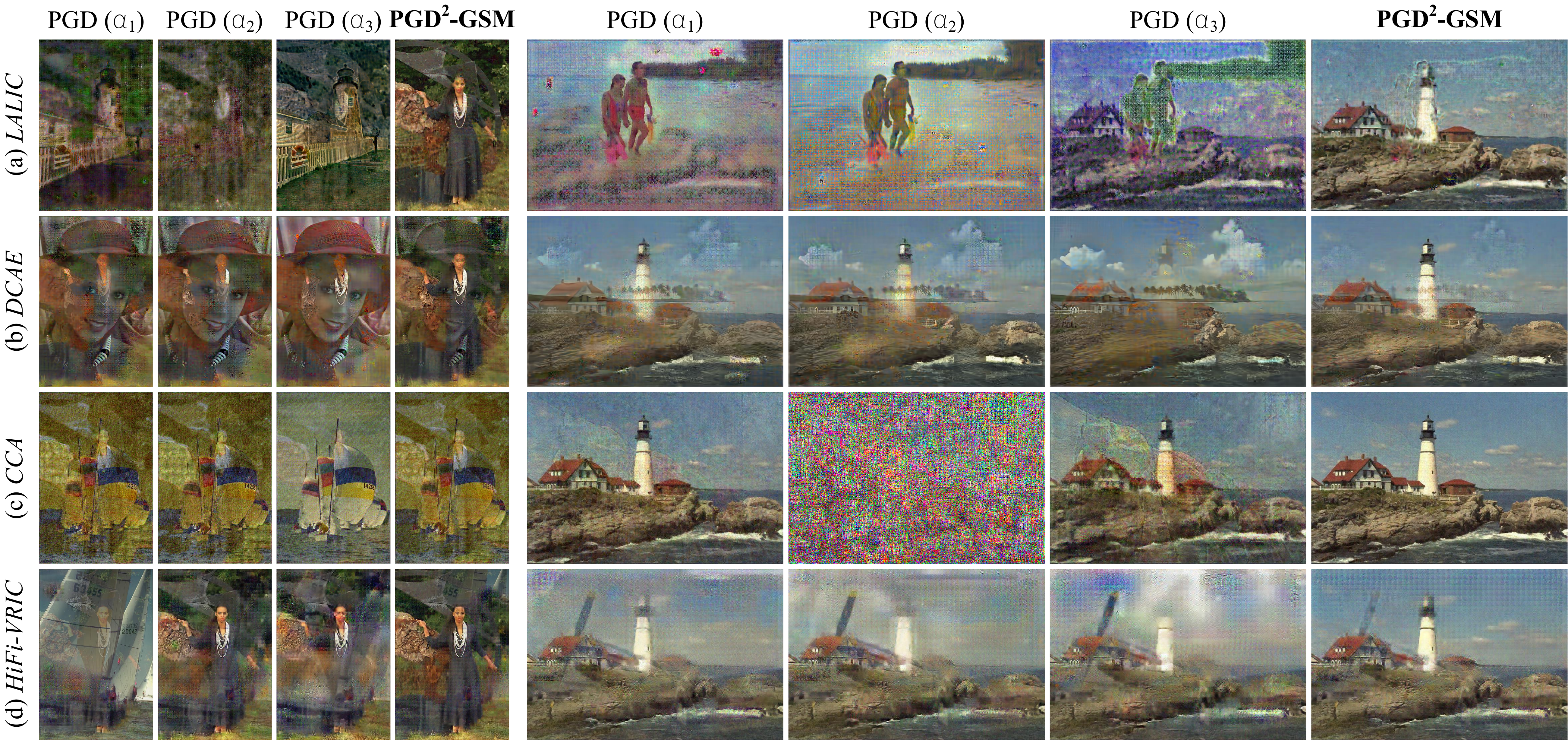}
    \caption{Visual comparison between PGD$^2$-GSM and PGD with different step sizes.}
    \label{fig:experiment_visualization_comparison}
\end{figure}

\section{Experiments and Results}
\label{section: experiments}
\subsection{Setup}
\label{section: Setup}
\textbf{LIC Benchmarks.} The LIC models used in our experiments include advanced \textit{LALIC}~\cite{feng2025linear} (CVPR'25), \textit{DCAE}~\cite{lu2025learned} (CVPR'25), and \textit{CCA}~\cite{han2024causal} (NeruIPS'24), as well as the early \textit{HiFi-VRIC}~\cite{cai2022high} (ACMMM'22). These models introduce innovations in transform architectures, entropy modeling, loss functions, and variable-rate modeling, respectively, providing a comprehensive evaluation.

\textbf{Dataset and Metrics.} We evaluate on the standard LIC benchmark Kodak~\cite{kodak1993} (Figure~\ref{fig:Kodak} in the Appendix), which contains 24 color images of size $512\times768$ or $768\times512$. Among them, \texttt{kodim18} and \texttt{kodim21} are employed as target images for the two resolutions, respectively, and the remaining 22 images are as sources. Peak Signal-to-Noise Ratio (PSNR), Multi-Scale Structural Similarity (MS-SSIM), Learned Perceptual Image Patch Similarity (LPIPS, AlexNet as the backbone), and Contrastive Language–Image Pretraining Similarity (CLIP, ViT-B/16 as the backbone) are used to evaluate the similarity between the manipulated reconstructions and the target images at the pixel, structural, and semantic levels, while bits per pixel (bpp) is adopted to measure the compression rate.

\textbf{Implementation Details.} For PGD$^{2}$-GSM, by default, the initial step size $\alpha$ is set to $0.01$ for \textit{LALIC}, \textit{DCAE}, and \textit{HiFi-VRIC}, and $0.004$ for \textit{CCA}. The decay factor $k$ and decay period $P$ are set to $0.5$ and $T/5$, respectively. All experiments for \textit{DCAE}, \textit{CCA}, and \textit{HiFi-VRIC} are conducted on a single NVIDIA A100 GPU (80GB). Due to hard-coded implementation constraints, \textit{LALIC} experiments are restricted to a single NVIDIA 2080 Ti GPU. We report results averaged over multiple random seeds.
\begin{figure}[t]
    \centering
    \includegraphics[width=0.96\linewidth]{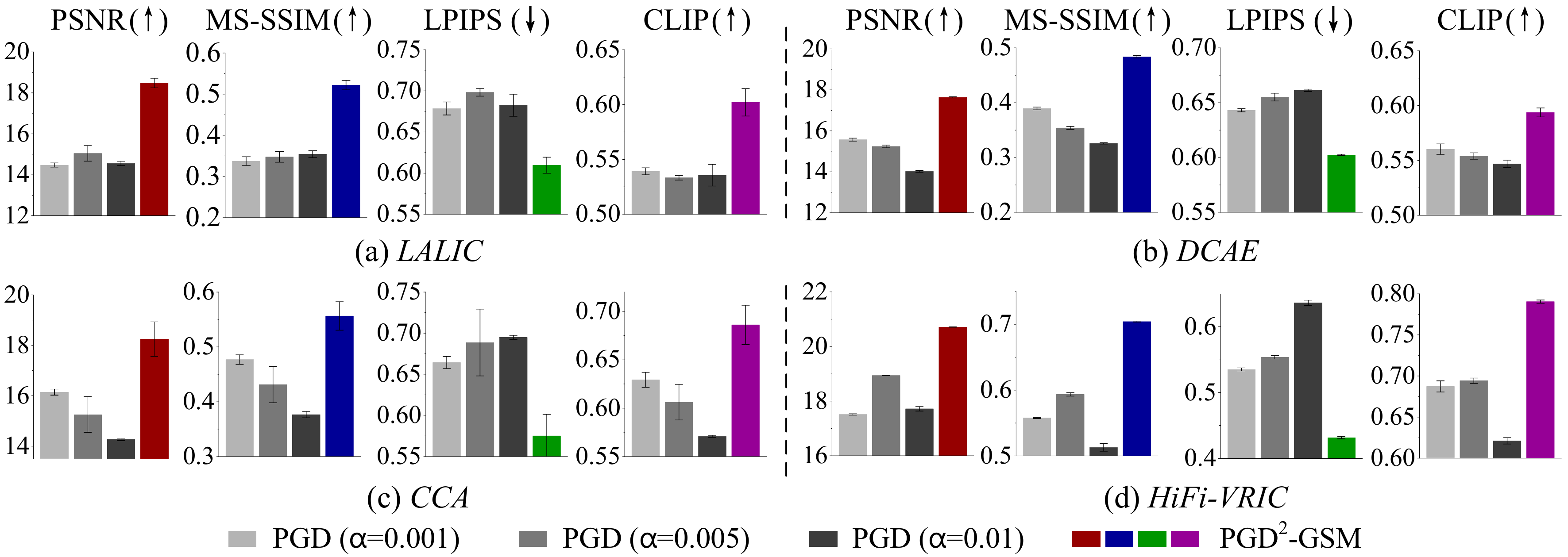}
    \caption{Quantitative comparison between PGD$^2$-GSM and PGD with different step sizes.}
    \label{fig:experiment_comparison_with_PGD}
\end{figure}

\begin{table}[h]
    \centering
    \caption{Compression rate (bpp) among clean images and GSM adversarial examples generated by PGD and PGD$^2$-GSM. N/A indicates that encoding fails to complete within a 300s time limit.}
    \begin{tabular}{c|ccccc}
        \toprule
        &Clean&PGD($\alpha_{1}$)&PGD($\alpha_{2}$)&PGD($\alpha_{3}$)&PGD$^{2}$-GSM\\
        \midrule
         \textit{HiFi-VRIC}~\cite{cai2022high}&1.030&N/A&N/A&N/A&N/A\\

         \textit{CCA}~\cite{han2024causal}&0.720&5.339$\pm$0.290&4.839$\pm$0.547&3.619$\pm$0.160&4.612$\pm$0.407\\

         \textit{DCAE}~\cite{lu2025learned}&0.106&0.528$\pm$0.004&0.485$\pm$0.006&0.363$\pm$0.005&0.986$\pm$0.021\\
         \textit{LALIC}~\cite{feng2025linear}&0.111&1.710$\pm$0.192&3.622$\pm$0.249&3.362$\pm$0.856&2.615$\pm$0.231\\
         
         \bottomrule
    \end{tabular}
    
    \label{tab:compression_rate_bpp}
\end{table}

\subsection{Quantitative Comparison of Semantic Manipulation in Reconstructions}
\label{section:Quantitative_Comparison_Reconstructions}
In this experiment, we quantitatively compare the proposed PGD$^{2}$-GSM and PGD with different step sizes $\alpha_{1}$, $\alpha_{2}$ and $\alpha_{3}$ in manipulating the reconstructed images of LIC. For PGD attacks, the step size is set to $\alpha_{1}=0.001, \alpha_{2}=0.005, \alpha_{3}=0.01$ for \textit{LALIC}, \textit{DCAE}, and \textit{HiFi-VRIC}, and to $\alpha_{1}=0.001, \alpha_{2}=0.002, \alpha_{3}=0.004$ for \textit{CCA}. We set $T=500, 40K, 40K$, and $5K$ for \textit{LALIC}, \textit{DCAE}, \textit{CCA}, and \textit{HiFi-VRIC}, respectively. Experiments are repeated with random seeds 0, 1, and 2, and the mean and standard deviation of PSNR, MS-SSIM, LPIPS, and CLIP are reported in Figure~\ref{fig:experiment_comparison_with_PGD}. The results show that PGD$^2$-GSM consistently outperforms PGD with different step sizes, yielding adversarial reconstructions that are closer to the target at the pixel, structural, and semantic levels.

\subsection{Visual Comparison of Semantic Manipulation in Reconstructions}
To visually demonstrate the significant advantage of PGD$^2$-GSM over PGD, we present the adversarial examples and their reconstructions in Figures~\ref{fig:adv_example_comparison_visualization} (in Appendix) and~\ref{fig:experiment_visualization_comparison}, respectively. The source images are \texttt{Kodim19} \& \texttt{Kodim12} for \textit{LALIC}, \texttt{Kodim04} \& \texttt{Kodim16} for \textit{DCAE}, \texttt{Kodim09} \& \texttt{Kodim03} for \textit{CCA}, and \texttt{Kodim10} \& \texttt{Kodim20} for \textit{HiFi-VRIC}, and the seed is 0. Figure~\ref{fig:experiment_visualization_comparison} shows that PGD with a fixed step size, due to its inability to balance the exploration of the amplification region with refinement, produces unstable adversarial reconstructions and poor visual manipulation across models. In contrast, PGD$^2$-GSM adjusts the step size via a simple periodic geometric decay scheme without any other modifications, yet it effectively balances the exploration of the amplification region and refinement, thereby achieving markedly stable and visually satisfactory GSM results across different models. This result demonstrates that PGD$^{2}$-GSM is the first to achieve stable high-resolution GSM.

\subsection{Impact of High-Resolution GSM on Compression Rate}
\label{section: Compression Rate}
\cite{chen2023toward} has shown that untargeted attacks on distortion $\mathcal{D}$ lead to a simultaneous collapse of the compression rate $\mathcal{R}$. We measure the bpp of adversarial examples generated by PGD and PGD$^2$-GSM in Section~\ref{section:Quantitative_Comparison_Reconstructions}, and observe that this phenomenon also occurs in high-resolution GSM. The results reported in Table~\ref{tab:compression_rate_bpp} indicate that the bpp of adversarial examples in high-resolution GSM is consistently much higher than that of clean images, and even prevents \textit{HiFi-VRIC} with an autoregressive entropy model from completing compression encoding within an reasonable time. This is because the perturbations refined within the amplification region exhibit highly intricate distributions.


\begin{figure}[t!]
    \centering
    \includegraphics[width=0.96\linewidth]{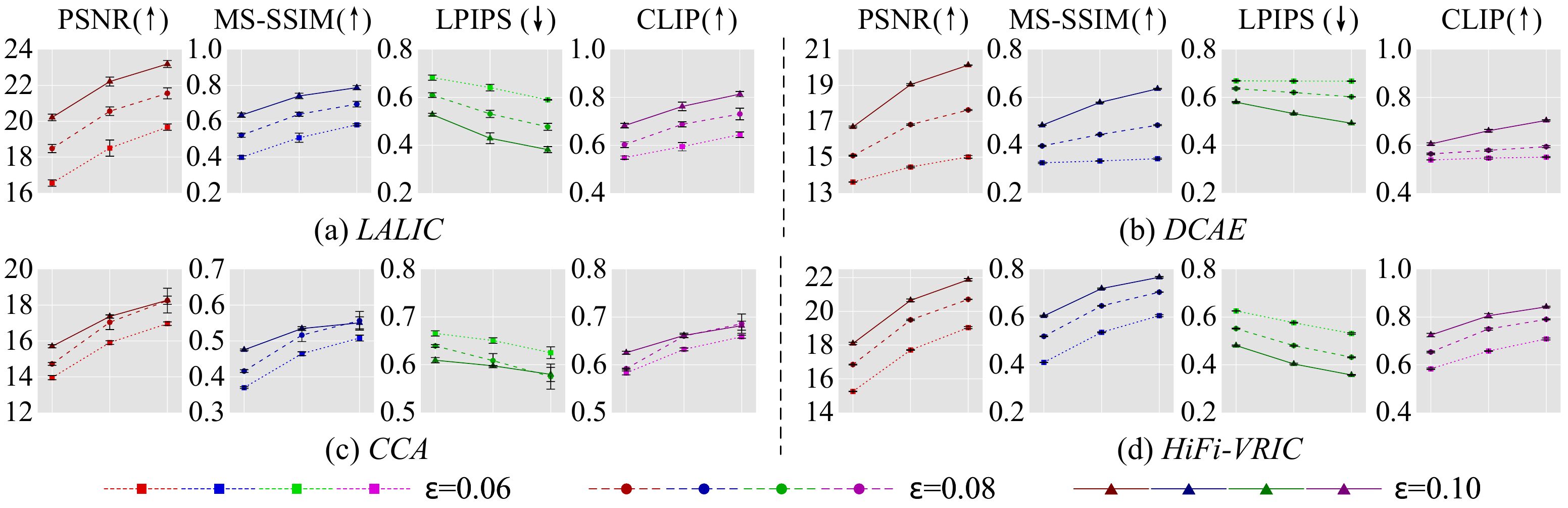}
    \caption{Quantitative results of PGD$^2$-GSM under different $T$ and $\epsilon$.}
    \label{fig:experiment_ablation_illustrations}
\end{figure}
\begin{figure}[t!]
    \centering
    \includegraphics[width=0.96\linewidth]{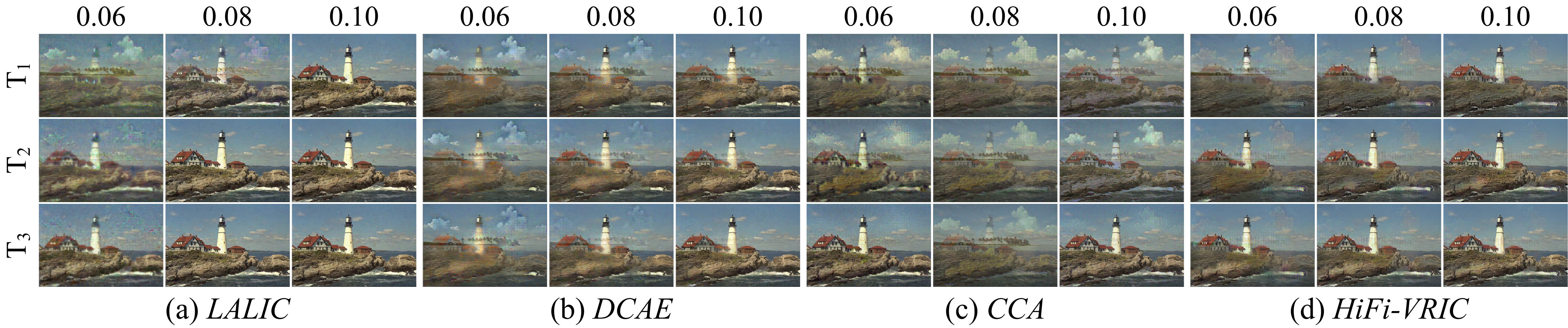}
    \caption{Visualization of PGD$^2$-GSM under different step numbers $T$ and perturbation budgets $\epsilon$.}
    \label{fig:ablation_visualizations}
\end{figure}
\subsection{Performance under Different Perturbation Budgets $\epsilon$}
\label{section: experiments_Different_Perturbation_Budgets}
The above experiments compare the performance of PGD$^{2}$-GSM and PGD under fixed $\epsilon=0.08$. However, the trade-off between stealthiness and the quality of the manipulated reconstruction may incentivize an adversary to either reduce or increase $\epsilon$. To this end, we further examine performance across varying $\epsilon=0.06,0.08$ and $0.10$, in conjunction with iteration numbers $T_{1}$, $T_{2}$, and $T_{3}$. Specifically, ($T_{1}$, $T_{2}$, $T_{3}$) are set to (500, 1000, 1500) for \textit{LALIC}, (5K, 20K, 40K) for \textit{DCAE} and \textit{CCA}, and (1K, 3K, 5K) for \textit{HiFi-VRIC}. Quantitative results are presented in Figure~\ref{fig:experiment_ablation_illustrations}, while qualitative visualizations are shown in Figure~\ref{fig:ablation_visualizations}. The results indicate that, under a fixed number of iterations, increasing $\epsilon$ effectively improves the quality of the manipulated reconstruction. In particular, at $T=T_{3}$ and $\epsilon=0.10$, the average PSNR reaches $23.2$ dB, $20.1$ dB, $18.3$ dB, and $21.9$ dB on \textit{LALIC}, \textit{DCAE}, \textit{CCA}, and \textit{HiFi-VRIC}, respectively. This suggests that, at this perturbation level, PGD²-GSM is capable of effectively steering the semantic structure of the reconstructed images.

\subsection{Performance under Different Iteration Numbers $T$}
\label{section: Performance under Different Iteration Numbers}
With $\epsilon$ fixed and $T$ treated as the variable of interest, Figures~\ref{fig:experiment_ablation_illustrations} and~\ref{fig:ablation_visualizations} illustrate the performance of PGD²-GSM across different $T$. Notably, unlike untargeted attacks, which typically succeed within tens to hundreds of iterations, high-resolution GSM generally requires thousands to tens of thousands of iterations to achieve satisfactory performance. This discrepancy arises because untargeted attacks only need to enter the amplification region, whereas high-resolution GSM demands additional refinement beyond that point.
Furthermore, results on \textit{DCAE} (Figure~\ref{fig:experiment_ablation_illustrations}(b)) reveal that when $\epsilon$ remains below a certain threshold, increasing the number of iterations yields little to no improvement. This highlights that both a sufficiently large $T$ and $\epsilon$ are crucial for effective GSM.

\section{Conclusions and Limitations}
\label{section: Conclusions and Limitations}
We present the first systematic study of $\ell_{\infty}$-bounded high-resolution GSM. Our analysis reveals a key distinction of LIC models: their intrinsic identity-mapping partitions the input space into the Identity Region and the Amplification Region. 
Successful GSM requires driving perturbations into the amplification region and refining them. Building on this, we characterize the dynamics by Lazying-Oscillating-Refining and introduce Lazy Cosine Similarity to validate this progression. 
This suggests the need for a large $\alpha$ to explore the amplification region and a smaller $\alpha$ for refinement. 
We show that general $\ell_{\infty}$-bounded attacks fail as their schedules cannot accommodate both the Oscillating and Refining stages. 
To this end, we propose the Periodic Geometric Decay schedule to enable $\ell_{\infty}$-bounded attacks on GSM, and integrate it with PGD, yielding the minimal yet effective variant, PGD$^{2}$-GSM. 
Nevertheless, our PGD$^{2}$-GSM is still sensitive to initialization and can be computationally expensive, with single-sample attacks approaching the cost of training a small model. Developing more robust initialization and improving efficiency remain important future directions.




\bibliographystyle{unsrt}
\bibliography{references}

\appendix
\newpage
{\centering \large \textbf{Appendix}\par}

\noindent\textbf{\large Section~\ref{section: Proofs}: Proofs}

Section~\ref{proof:adv_in_amplification_region}: Proof of Lemma~\ref{lemma:adv_in_amplification_region} \dotfill \pageref{proof:adv_in_amplification_region}

\bigskip

\noindent\textbf{\large Section~\ref{section: Supplementary Materials}: Supplementary Materials}

Section~\ref{section: Kodak Dataset}: Kodak Dataset \dotfill \pageref{section: Kodak Dataset}

\bigskip

\noindent\textbf{\large Section~\ref{section: Supplementary Experiments}: Supplementary Experiments}

Section~\ref{section: Visual Comparison of High-Resolution GSM Examples}: Visual Comparison of High-Resolution GSM Examples \dotfill \pageref{section: Visual Comparison of High-Resolution GSM Examples}

Section~\ref{section: GSM Example Visualization for Different Pertation Budgets and Iterations}: GSM Example Visualization for Different Pertation Budgets and Iterations \dotfill \pageref{section: GSM Example Visualization for Different Pertation Budgets and Iterations}

Section~\ref{section: Bypassing Defenses: A Case Study on JPEG-based Defenses}: Bypassing Defenses: A Case Study on JPEG-based Defenses \dotfill \pageref{section: Bypassing Defenses: A Case Study on JPEG-based Defenses}

Section~\ref{section: Ablations on Decay Factor $k$}: Ablations on Decay Factor $k$ \dotfill \pageref{section: Ablations on Decay Factor $k$}

\newpage
\section{Proofs}
\label{section: Proofs}
\subsection{Proof of Lemma~\ref{lemma:adv_in_amplification_region}}
\label{proof:adv_in_amplification_region}
\begin{proof}
    We prove Lemma~\ref{lemma:adv_in_amplification_region} by contradiction. 
    
    Assume that the adversarial example of GSM $\boldsymbol{x}_{p}+\boldsymbol{\delta}$ lies in identity region, then we have 
    \begin{equation}
        ||f(\boldsymbol{x}_{p}+\boldsymbol{\delta})-\boldsymbol{x}_{p}||\approx||\boldsymbol{x}_{q}-\boldsymbol{x}_{p}||,\quad\boldsymbol{x}_{p}+\boldsymbol{\delta}\in R_{I}.
        \label{equation_proof_1}
    \end{equation}
    By the definition~\ref{definition_identity_region}, we have 
    \begin{equation}
        ||f(\boldsymbol{x}_{p}+\boldsymbol{\delta})-\boldsymbol{x}_{p}||=||\boldsymbol{x}_{p}+\boldsymbol{\delta}+\boldsymbol{\eta}-\boldsymbol{x}_{p}||=||\boldsymbol{\delta}+\boldsymbol{\eta}||\ll||\boldsymbol{x}_{q}-\boldsymbol{x}_{p}||.
        \label{equation_proof_2}
    \end{equation}
    Equation~\ref{equation_proof_2} contradicts Equation~\ref{equation_proof_1}.
\end{proof}

\section{Supplementary Materials}
\label{section: Supplementary Materials}
\subsection{Kodak Dataset}
\label{section: Kodak Dataset}
The full Kodak dataset is shown in Figure~\ref{fig:Kodak} for reference.
\begin{figure}[h]
    \centering
    \includegraphics[width=0.90\linewidth]{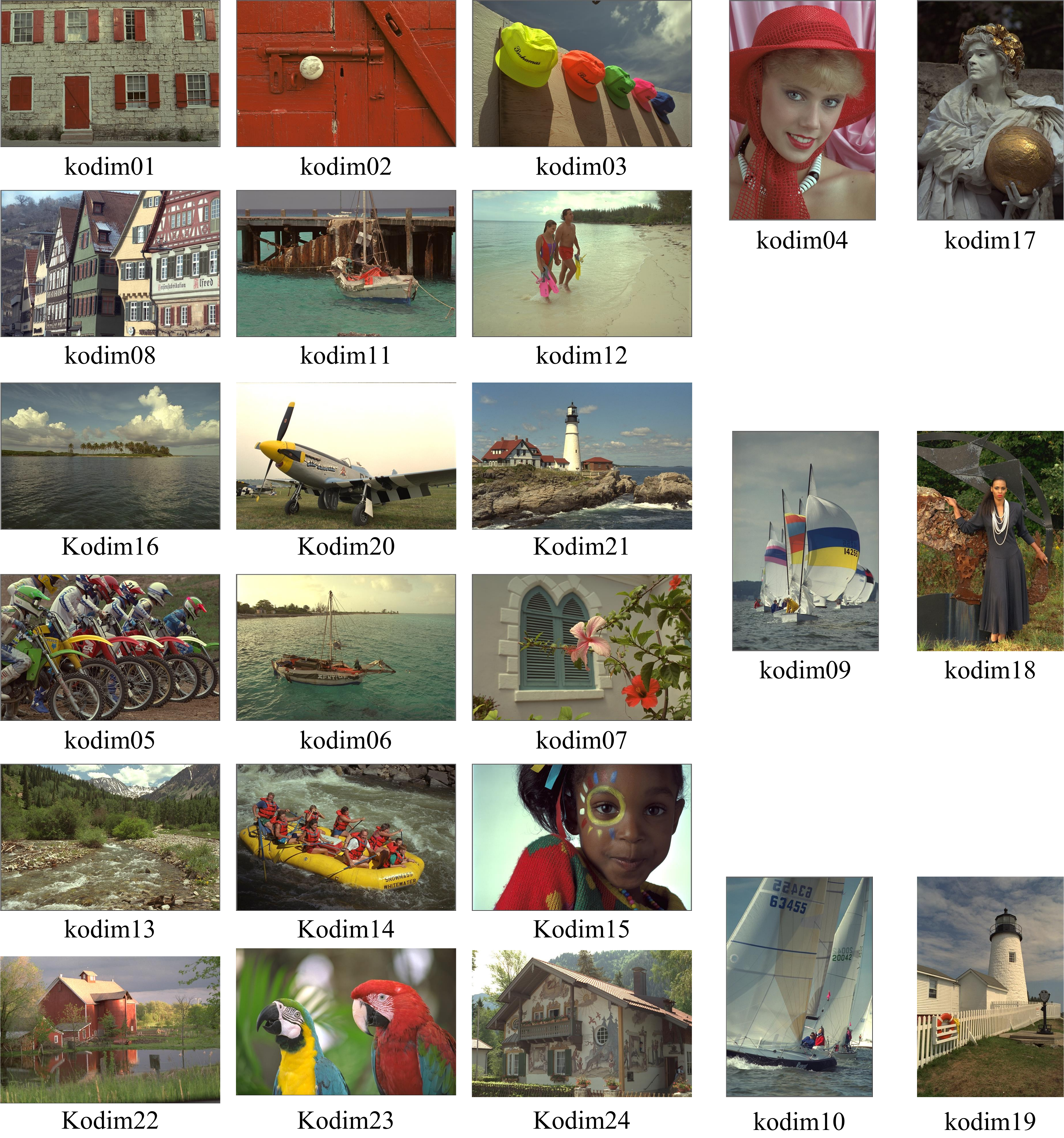}
    \caption{Kodak dataset for reference. Each image has a resolution of either $512\times768$ or $768\times512$. In this work, \texttt{kodim18} and \texttt{kodim21} are employed as target images for high-resolution GSM.}
    \label{fig:Kodak}
\end{figure}

\section{Supplementary Experiments}
\label{section: Supplementary Experiments}
\subsection{Visual Comparison of High-Resolution GSM Examples}
\label{section: Visual Comparison of High-Resolution GSM Examples}
We present visualizations of high-resolution GSM adversarial examples generated by PGD and PGD$^{2}$-GSM in Figure~\ref{fig:adv_example_comparison_visualization}, with the same parameter settings as in Section~\ref{section:Quantitative_Comparison_Reconstructions}. From the adversarial examples generated by PGD($\alpha_{1}$) on \textit{LALIC}, \textit{DCAE}, \textit{CCA}, and \textit{HiFi-VRIC}, we find that the perturbations resemble a watermark of the target image overlaid on the source image. This is because $\alpha_{1}$ is the smallest step size used in our experiments. As discussed in Section~\ref{section: PGD2_GSM}, PGD with a fixed small step size fails to enter the amplification region, causing the adversarial examples to remain in the identity region and approximate lazy examples. Such watermark-like adversarial examples closely resemble lazy examples. The watermark pattern diminishes as the step size increases to $\alpha_{2}$ and $\alpha_{3}$ or when using PGD$^{2}$-GSM. This is because larger step sizes enable broader exploration of the amplification region and faster entry into it, allowing the perturbations to leave the identity region and move away from lazy examples.
\begin{figure}[h]
    \centering
    \includegraphics[width=0.95\linewidth]{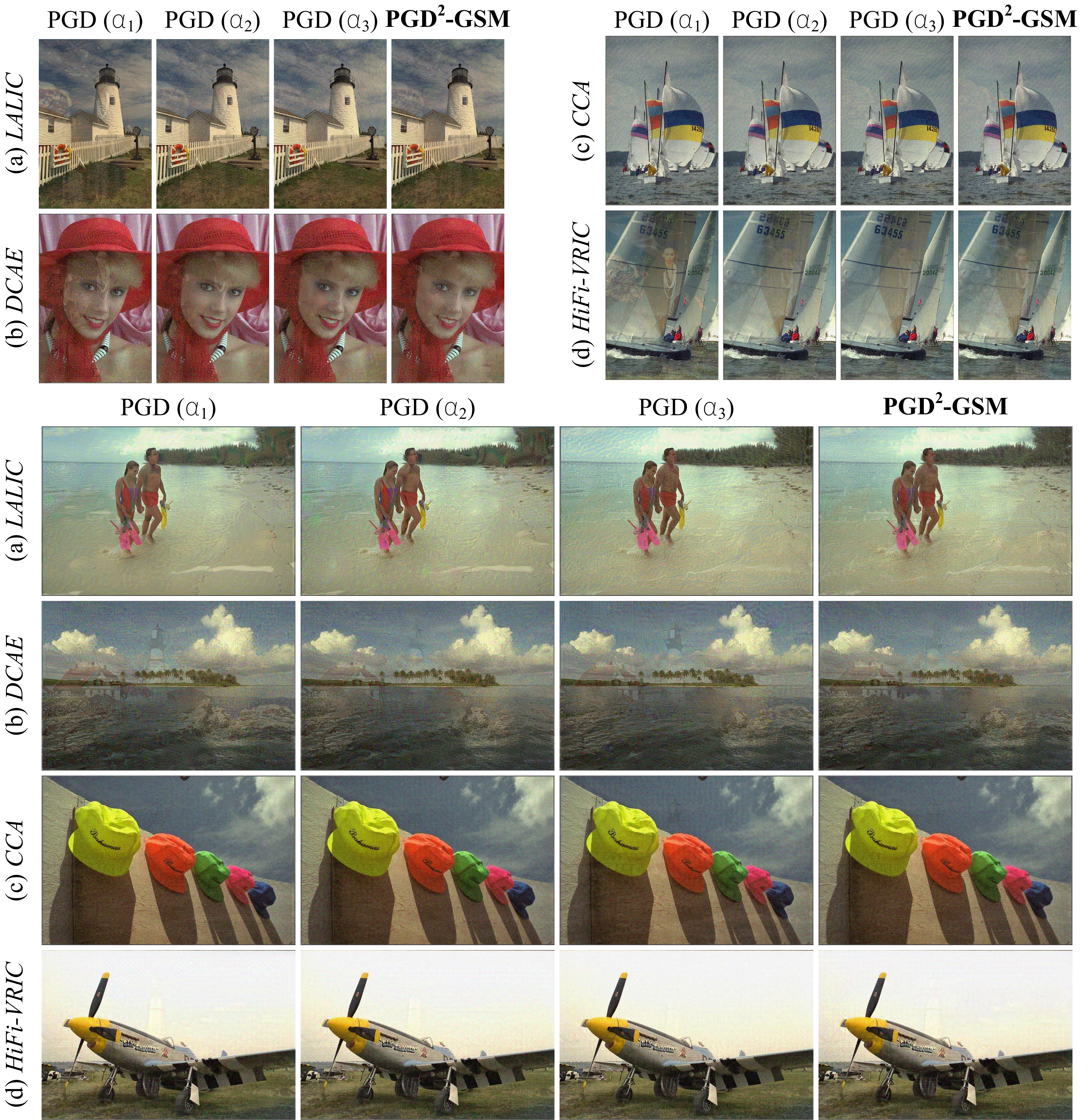}
    \caption{Visualizations of high-resolution GSM examples generated by PGD and PGD$^{2}$-GSM.}
    \label{fig:adv_example_comparison_visualization}
\end{figure}

\newpage
\subsection{GSM Example Visualization for Different Pertation Budgets and Iterations}
\label{section: GSM Example Visualization for Different Pertation Budgets and Iterations}
Following the settings in Section~\ref{section: experiments_Different_Perturbation_Budgets}, we present in Figure~\ref{fig:ablation_adv_examples_visualizations} visualizations of adversarial examples generated by PGD$^{2}$-GSM on \textit{LALIC}, \textit{DCAE}, \textit{CCA}, and \textit{HiFi-VRIC} under different perturbation budgets ($\epsilon=0.06, 0.08, 0.10$) and iteration numbers ($T=T_{1}, T_{2}, T_{3}$). The source and target images are \texttt{Kodim16} and \texttt{Kodim21}, respectively. Figure~\ref{fig:ablation_adv_examples_visualizations} reveals an interesting phenomenon: adversarial examples generated with smaller perturbation budgets $\epsilon$ and fewer iterations $T$ (e.g., $\epsilon=0.06$, $T=T_{1}$ in Figure~\ref{fig:ablation_adv_examples_visualizations}(a)) exhibit more pronounced watermark artifacts, and their manipulated reconstructions in Figure~\ref{fig:ablation_visualizations}(a) are of lower quality. In contrast, with larger $\epsilon$ and $T$ (e.g., $\epsilon=0.10$, $T=T_{3}$), the watermark artifacts become weaker or disappear, and the corresponding reconstruction quality improves. This indicates that escaping lazy examples and entering the amplification region requires sufficiently many iterations and adequate perturbation budget. Otherwise, the amplification region remains difficult to reach.
\begin{figure}[h]
    \centering
    \includegraphics[width=0.95\linewidth]{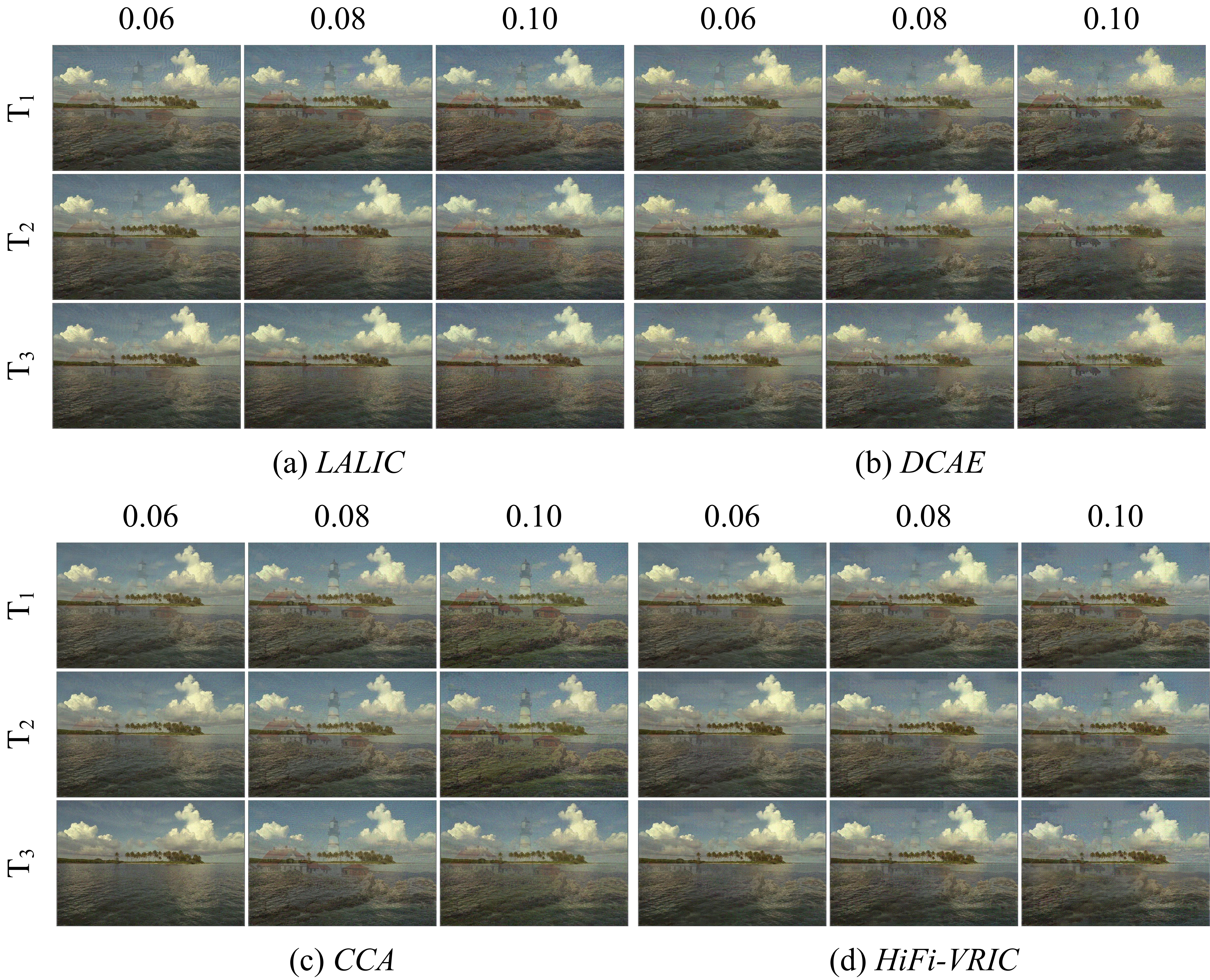}
    \caption{Visualizations of adversarial examples generated by PGD$^{2}$-GSM under different perturbation budgets and iteration numbers. The source and target images are \texttt{Kodim16} and \texttt{Kodim21}, respectively.}
    \label{fig:ablation_adv_examples_visualizations}
\end{figure}

\newpage
\subsection{Bypassing Defenses: A Case Study on JPEG-based Transformation Defense}
\label{section: Bypassing Defenses: A Case Study on JPEG-based Defenses}
We further evaluate high-resolution GSM under defensive settings in LIC models. Specifically, we consider JPEG-based transformation defense as a representative case to simulate practical defense scenarios. We set the perturbation budget of PGD$^{2}$-GSM to $0.10$. The number of iterations is set to $T=1500$ for \textit{LALIC}, $T=40K$ for \textit{DCAE} and \textit{CCA}, and $T=5K$ for \textit{HiFi-VRIC}. The initial step size is set to $0.01$ for \textit{LALIC}, \textit{DCAE}, and \textit{HiFi-VRIC}, and $0.004$ for \textit{CCA}. The random seed is $0$.

Under the setting without defenses in the LIC model, this configuration effectively enables high-resolution GSM, as discussed above. The corresponding reconstructions and quantitative results are presented in Figure~\ref{fig:JPEG_defense_visualizations} \textit{(w/o, w/o)} column and Table~\ref{table:JPEG_defense_quantizations} \textit{(w/o, w/o)} row, respectively. However, we find that the adversarial perturbations of high-resolution GSM are highly fine-grained and can be easily smoothed by preprocessing. With JPEG compression ($Q=90$) applied before reconstruction, high-resolution GSM completely fails, as shown in the \textit{(w/o, \textbf{with})} setting in Figure~\ref{fig:JPEG_defense_visualizations} and Table~\ref{table:JPEG_defense_quantizations}.

In practice, when a LIC model is deployed as a general compression standard, its implementation details are assumed to be publicly available, making the defense mechanism fully accessible to the attacker in a white-box setting. Therefore, the attacker can integrate differentiable preprocessing transformations into the LIC pipeline, enabling high-resolution GSM to bypass input preprocessing defenses. To this end, we incorporate a differentiable JPEG approximation (with $Q=90$) into the LIC model within high-resolution GSM, and present the reconstructions and quantization results in the \textit{(\textbf{with}, \textbf{with})} setting in Figure~\ref{fig:JPEG_defense_visualizations} and Table~\ref{table:JPEG_defense_quantizations}. As shown, on \textit{LALIC}, PGD$^{2}$-GSM effectively leverages the differentiable JPEG approximation to bypass the defense, albeit with some performance degradation. On \textit{HiFi-VRIC}, it is still able to marginally bypass the JPEG defense and achieve high-resolution GSM. In contrast, on \textit{DCAE} and \textit{CCA}, even with the differentiable JPEG approximation, PGD$^{2}$-GSM fails to effectively overcome the defense and achieve high-resolution GSM. This suggests that the difficulty of achieving high-resolution GSM varies across different models under both standard and defense settings, indicating that certain architectural factors play a significant role.

\begin{table}[h]
    \centering
    \setlength{\tabcolsep}{3.2pt}
    \caption{Performance of PGD$^{2}$-GSM for high-resolution GSM under standard and defense settings. In the \textit{Attack} column, \textit{w/o} and \textit{\textbf{with}} denote whether the differentiable JPEG approximation ($Q=90$) is used for generating adversarial examples. In the \textit{Inference} column, they denote whether JPEG ($Q=90$) is applied as a transformation defense during reconstruction.}
    \begin{tabular}{cc|cccc|cccc}
    \toprule
    
    \multirow{2}{*}{Attack}&\multirow{2}{*}{Inference}&\multicolumn{4}{c|}{\textit{LALIC}}&\multicolumn{4}{c}{\textit{DCAE}}\\
    \cmidrule{3-10}
    &&PSNR$\uparrow$&MS-SSIM$\uparrow$&LPIPS$\downarrow$&CLIP$\uparrow$&PSNR$\uparrow$&MS-SSIM$\uparrow$&LPIPS$\downarrow$&CLIP$\uparrow$ \\
    \midrule
         \textit{w/o}&\textit{w/o}&23.177&0.788&0.383&0.814&20.129&0.635&0.493&0.707 \\
         \textit{w/o}&\textit{\textbf{with}}&12.162&0.296&0.696&0.530&12.179&0.332&0.701&0.521 \\
         \cellcolor{gray!18}\textit{\textbf{with}}&\cellcolor{gray!18}\textit{\textbf{with}}&\cellcolor{gray!18}19.885&\cellcolor{gray!18}0.610&\cellcolor{gray!18}0.562&\cellcolor{gray!18}0.664&\cellcolor{gray!18}15.042&\cellcolor{gray!18}0.421&\cellcolor{gray!18}0.602&\cellcolor{gray!18}0.568 \\

    \midrule
    
    \multirow{2}{*}{Attack}&\multirow{2}{*}{Inference}&\multicolumn{4}{c|}{\textit{CCA}}&\multicolumn{4}{c}{\textit{HiFi-VRIC}}\\
    \cmidrule{3-10}
    &&PSNR$\uparrow$&MS-SSIM$\uparrow$&LPIPS$\downarrow$&CLIP$\uparrow$&PSNR$\uparrow$&MS-SSIM$\uparrow$&LPIPS$\downarrow$&CLIP$\uparrow$ \\
    \midrule
         \textit{w/o}&\textit{w/o}&18.078&0.552&0.587&0.684&21.946&.0.7692&0.357&0.842 \\
         \textit{w/o}&\textit{\textbf{with}}&12.414&0.319&0.713&0.578&11.919&0.325&0.776&0.565 \\
         \cellcolor{gray!18}\textit{\textbf{with}}&\cellcolor{gray!18}\textit{\textbf{with}}&\cellcolor{gray!18}14.003&\cellcolor{gray!18}0.400&\cellcolor{gray!18}0.639&\cellcolor{gray!18}0.591&\cellcolor{gray!18}16.961&\cellcolor{gray!18}0.575&\cellcolor{gray!18}0.499&\cellcolor{gray!18}0.671 \\
         
    \bottomrule
    \end{tabular}
    \label{table:JPEG_defense_quantizations}
\end{table}

\begin{figure}[h]
    \centering
    \includegraphics[width=1.00\linewidth]{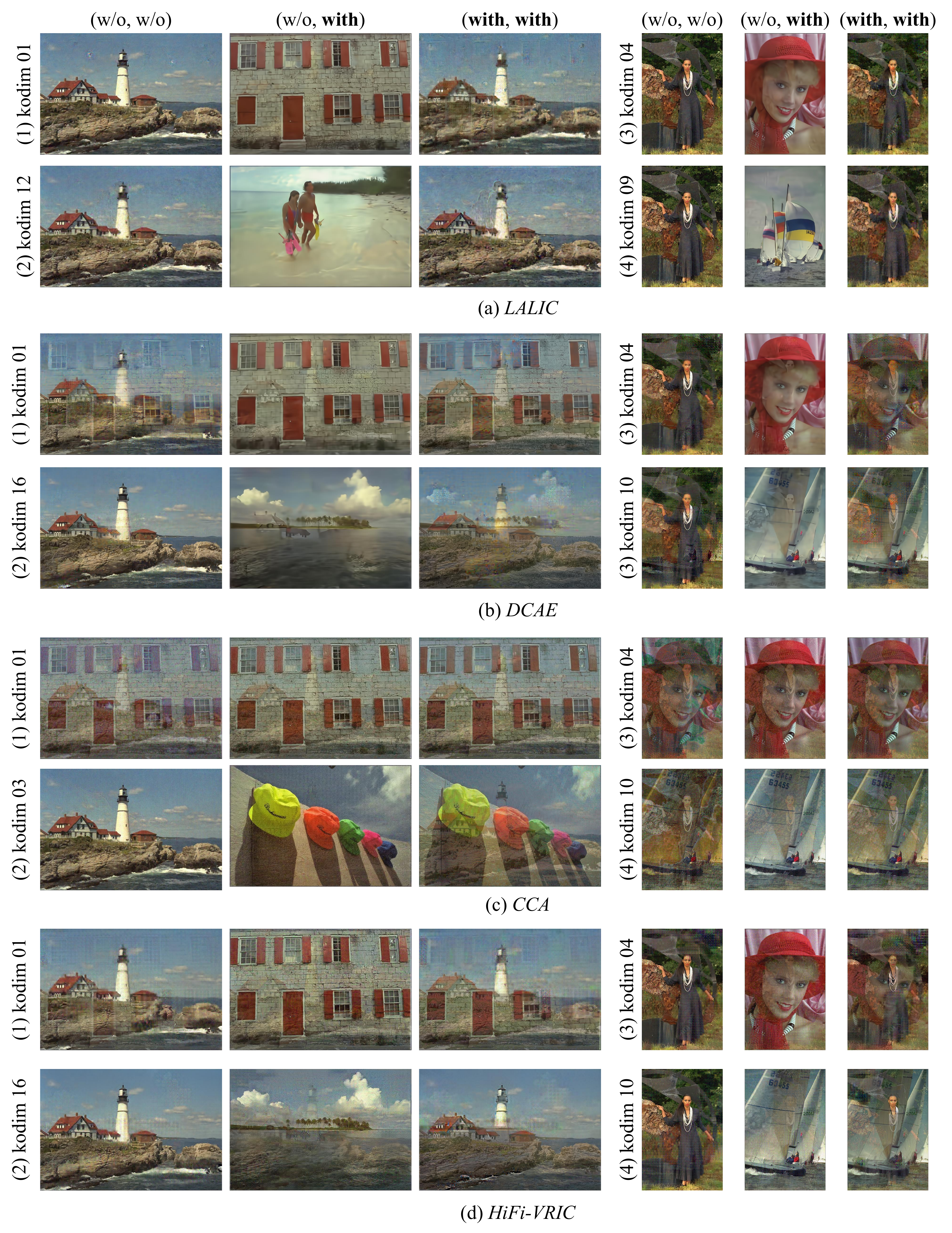}
    \caption{Reconstructions of high-resolution GSM examples. In the parentheses, the first \textit{w/o} denotes that high-resolution GSM examples are generated directly on the LIC model, while \textit{\textbf{with}} indicates generation on the cascaded differentiable JPEG and LIC model. The second \textit{w/o} denotes reconstruction without JPEG, whereas \textit{\textbf{with}} applies a JPEG transformation prior to reconstruction. The source image is indicated on the left of each image group. The target image is either \texttt{kodim18} or \texttt{kodim21}, depending on the resolution.}
    \label{fig:JPEG_defense_visualizations}
\end{figure}

\clearpage
\subsection{Ablations on Decay Factor $k$}
\label{section: Ablations on Decay Factor $k$}
In this experiment, we investigate the impact of the decay factor $k$ in the Periodic Geometric Decay step-size schedule on the performance of high-resolution GSM. The number of iterations, initial step size, and random seed are kept consistent with those in Section~\ref{section: Bypassing Defenses: A Case Study on JPEG-based Defenses}. We vary the decay factor as $k \in \{1.0, 1.5, 2.0, 2.5, 3.0\}$. Notably, when $k=1.0$, PGD$^{2}$-GSM degenerates to PGD. Quantitative results are reported in Table~\ref{table: ablations_on_k}. As shown in Table~\ref{table: ablations_on_k}, with an interval of $\Delta k=0.5$, the performance of PGD$^{2}$-GSM consistently peaks at $k=2.0$ in \textit{LALIC}, \textit{DCAE}, and \textit{HiFi-VRIC}, and peaks at $k=2.5$ in \textit{CCA}. A small $k$ results in insufficient decay, causing the optimization to spend excessive iterations in the oscillating stage exploring amplification regions while under-refining. Conversely, a large $k$ leads to overly aggressive decay, limiting exploration and potentially preventing the adversarial example from effectively entering amplification regions. The decay factor $k$ plays a critical role in balancing exploration (Oscillating stage) and refinement (Refining stage).

\begin{table}[h]
    \centering
    \setlength{\tabcolsep}{3.4pt}
    \caption{Ablation study of PGD$^{2}$-GSM on the decay factor $k$.}
    \begin{tabular}{c|cccc|cccc}
    \toprule
    
    \multirow{2}{*}{$k$}&\multicolumn{4}{c|}{\textit{LALIC}}&\multicolumn{4}{c}{\textit{DCAE}}\\
    \cmidrule{2-9}
    &PSNR$\uparrow$&MS-SSIM$\uparrow$&LPIPS$\downarrow$&CLIP$\uparrow$&PSNR$\uparrow$&MS-SSIM$\uparrow$&LPIPS$\downarrow$&CLIP$\uparrow$ \\
    \midrule
         $k=1.0$&14.654&0.360&0.689&0.546&13.966&0.325&0.661&0.548 \\
         $k=1.5$&18.389&0.511&0.626&0.608&17.117&0.441&0.623&0.580 \\
         $k=2.0$&\textbf{18.740}&\textbf{0.535}&\textbf{0.596}&\textbf{0.616}&\textbf{17.652}&\textbf{0.484}&\textbf{0.602}&\textbf{0.598} \\
         $k=2.5$&18.249&0.511&0.619&0.606&17.404&0.477&0.607&0.591 \\
         $k=3.0$&17.863&0.489&0.633&0.585&16.840&0.455&0.616&0.579 \\

    \midrule
    
    \multirow{2}{*}{$k$}&\multicolumn{4}{c|}{\textit{CCA}}&\multicolumn{4}{c}{\textit{HiFi-VRIC}}\\
    \cmidrule{2-9}
    &PSNR$\uparrow$&MS-SSIM$\uparrow$&LPIPS$\downarrow$&CLIP$\uparrow$&PSNR$\uparrow$&MS-SSIM$\uparrow$&LPIPS$\downarrow$&CLIP$\uparrow$ \\
    \midrule
         $k=1.0$&14.271&0.378&0.694&0.572&17.811&0.517&0.635&0.625 \\
         $k=1.5$&\textbf{17.489}&0.515&0.622&0.654&20.684&0.694&0.449&0.788 \\
         $k=2.0$&17.462&0.527&0.605&0.663&\textbf{20.719}&\textbf{0.705}&\textbf{0.432}&\textbf{0.791} \\
         $k=2.5$&17.462&\textbf{0.535}&\textbf{0.602}&\textbf{0.679}&20.511&0.697&0.436&0.788 \\
         $k=3.0$&17.156&0.524&0.605&0.660&20.246&0.685&0.447&0.778 \\
         
    \bottomrule
    \end{tabular}
    \label{table: ablations_on_k}
\end{table}



\end{document}